\newtheorem{theorem}{Theorem}[section]
\newtheorem*{theorem*}{Theorem}
\newtheorem*{proposition*}{Proposition}
\newtheorem{lemma}[theorem]{Lemma}
\newtheorem*{lemma*}{Lemma}
\newtheorem*{conjecture*}{Conjecture}
\newtheorem*{fact*}{Fact}
\newtheorem*{hypothesis*}{Hypothesis}
\theoremstyle{definition}
\newtheorem{assumption}[theorem]{Assumption}
\theoremstyle{remark}
\newtheorem*{remark*}{Remark}
\newtheorem*{observation*}{Observation}
\numberwithin{equation}{section}
\newcommand{\IGNORE}[1]{}
\newcommand{\Exp}{\mathop{\mathbb E}\displaylimits}
\newcommand{\argmax}{\mathop{\textup{argmax}}\displaylimits}
\def\shownotes{1}  \ifnum\shownotes=1
\newcommand{\authnote}[2]{$\ll$\textsf{\footnotesize #1 notes: #2}$\gg$}
 \newcommand{\authnote}[2]{}
\newcommand{\cG}{\mathcal{G}}
\newcommand{\sr}{\textup{succ}}
\newcommand{\pe}{\pi_{\textup{e}}}
\newcommand{\cM}{\mathcal{U}}
\newcommand{\pibc}{\pi_{\textup{BC}}}
\newcommand{\ac}{a_{\textup{cx}}}
\newcommand{\stc}{s_{\textup{cx}}}
\newcommand{\vins}{\textup{VINS}}
\newcommand\R{\mathbb{R}}
\newcommand{\cD}{D}
\let\epsilon=\varepsilon
\numberwithin{equation}{section}
\newcommand\MYcurrentlabel{xxx}
\newcommand{\MYstore}[2]{%
	\global\expandafter \def \csname MYMEMORY #1 \endcsname{#2}%
}
\newcommand{\MYload}[1]{%
	\csname MYMEMORY #1 \endcsname%
}
\newcommand{\MYnewlabel}[1]{%
	\renewcommand\MYcurrentlabel{#1}%
	\MYoldlabel{#1}%
}
\newcommand{\MYdummylabel}[1]{}
\newcommand{\torestate}[1]{%
	\let\MYoldlabel\label%
	\let\label\MYnewlabel%
	#1%
	\MYstore{\MYcurrentlabel}{#1}%
	\let\label\MYoldlabel%
}
\newcommand{\restatetheorem}[1]{%
	\let\MYoldlabel\label
	\let\label\MYdummylabel
	\begin{theorem*}[Restatement of \prettyref{#1}]
		\MYload{#1}
	\end{theorem*}
	\let\label\MYoldlabel
}
\newcommand{\restatelemma}[1]{%
	\let\MYoldlabel\label
	\let\label\MYdummylabel
	\begin{lemma*}[Restatement of \prettyref{#1}]
		\MYload{#1}
	\end{lemma*}
	\let\label\MYoldlabel
}
\newcommand{\restateprop}[1]{%
	\let\MYoldlabel\label
	\let\label\MYdummylabel
	\begin{proposition*}[Restatement of \prettyref{#1}]
		\MYload{#1}
	\end{proposition*}
	\let\label\MYoldlabel
}
\newcommand{\restatefact}[1]{%
	\let\MYoldlabel\label
	\let\label\MYdummylabel
	\begin{fact*}[Restatement of \prettyref{#1}]
		\MYload{#1}
	\end{fact*}
	\let\label\MYoldlabel
}
\newcommand{\restate}[1]{%
	\let\MYoldlabel\label
	\let\label\MYdummylabel
	\MYload{#1}
	\let\label\MYoldlabel
}
\newcommand{\eps}{\epsilon}
\let\origparagraph\paragraph
\renewcommand{\paragraph}[1]{\origparagraph{#1.}}
\newtheorem*{rep@theorem}{\rep@title}
\providecommand{\calB}{\mathcal{B}}
\providecommand{\calL}{\mathcal{L}}
\providecommand{\calN}{\mathcal{N}}
\providecommand{\calR}{\mathcal{R}}
\providecommand{\bbE}{\mathbb{E}}
\newcommand{\perturb}{\mathsf{perturb}}
\newcommand{\Lns}{\calL_{ns}}
\newcommand{\Lmodel}{\calL_{\text{model}}}
\title{Learning  Self-Correctable Policies and Value Functions from Demonstrations with Negative Sampling}
\author{Yuping Luo \\
	    Princeton University \\
	    yupingl@cs.princeton.edu \And
	    Huazhe Xu \\
	    University of California, Berkeley \\
	    huazhe\_xu@eecs.berkeley.edu \And
	    Tengyu Ma \\
	    Stanford University  \\
	    tengyuma@stanford.edu
	     }
\begin{document}

\maketitle

\begin{abstract}

Imitation learning, followed by reinforcement learning algorithms,  is a promising paradigm to solve complex control tasks sample-efficiently. However, learning from demonstrations often suffers from the covariate shift problem, which results in cascading errors of the learned policy. We introduce a notion of conservatively-extrapolated value functions, which provably lead to policies with self-correction. We design an algorithm Value Iteration with Negative Sampling (VINS) that practically learns such value functions with conservative extrapolation. We show that VINS can correct mistakes of the behavioral cloning policy on simulated robotics benchmark tasks. We also propose the algorithm of using VINS to initialize a reinforcement learning algorithm, which is shown to outperform prior works in sample efficiency.
\end{abstract}

\section{Introduction}
Reinforcement learning (RL) algorithms, especially with sparse rewards, often require a large amount of trial-and-errors. Imitation learning from a small number of demonstrations followed by RL fine-tuning is a promising paradigm to improve the sample efficiency~\citep{rajeswaran2017learning, vevcerik2017leveraging, hester2018deep,nair2018overcoming,gao2018reinforcement}.

The key technical challenge of learning from demonstrations is the covariate shift: the distribution of the states visited by the demonstrations often has a low-dimensional support; however, knowledge learned from this distribution may not necessarily transfer to other distributions of interests. This phenomenon applies to both learning the policy and the value function.
The policy learned from behavioral cloning has compounding errors after we execute the policy for multiple steps and reach unseen states~\citep{bagnell2015invitation,ross2010efficient}.
The value function learned from the demonstrations can also extrapolate falsely to unseen states. See Figure~\ref{fig:wrong_exploration} for an illustration of the false extrapolation in a toy environment.

We develop an algorithm that learns a value function that extrapolates to unseen states more conservatively, as an approach to attack the optimistic extrapolation problem~\citep{fujimoto2018off}.
Consider a state $s$ in the demonstration and its nearby state $\tilde{s}$ that is not in the demonstration. The key intuition is that $\tilde{s}$ should have a lower value than $s$, because otherwise $\tilde{s}$ likely should have been visited by the demonstrations in the first place. If a value function has this property for most of the pair $(s, \tilde{s})$ of this type, the corresponding policy will tend to correct its errors by driving back to the demonstration states because the demonstration states have locally higher values. We formalize the intuition in Section~\ref{sec:theory} by defining the so-called conservatively-extrapolated value function, which is guaranteed to induce a policy that stays close to the demonstrations states (Theorem~\ref{thm:main}).

In Section~\ref{sec:algo}, we design a practical algorithm for learning the conservatively-extrapolated value function by a negative sampling technique inspired by work on learning embeddings~\cite{mikolov2013distributed,gutmann2012noise}. We also learn a dynamical model by standard supervised learning so that we compute actions by maximizing the values of the predicted next states. This algorithm does not use any additional environment interactions, and we show that it empirically helps correct errors of the behavioral cloning policy.

When additional environment interactions are available, we use the learned value function and the dynamical model to initialize an RL algorithm. This approach relieves the inefficiency in the prior work~\citep{hester2018deep,nair2018overcoming,rajeswaran2017learning} that the randomly-initialized $Q$ functions require a significant amount of time and samples to be warmed up, even though the initial policy already has a non-trivial success rate. Empirically, the proposed algorithm outperforms the prior work in the number of environment interactions needed to achieve near-optimal success rate.

In summary, our main contributions are: 1) we formalize the notion of values functions with conservative extrapolation which are proved to induce policies that stay close to demonstration states and achieve near-optimal performances, 2) we propose the algorithm Value Iteration with Negative Sampling (\vins{}) that outperforms behavioral cloning on three simulated robotics benchmark tasks with sparse rewards, and 3) we show that initializing an RL algorithm from \vins{} outperforms prior work in sample efficiency on the same set of benchmark tasks.

\begin{figure}[t!]
    \vspace*{-1cm}
    \centering
    \begin{subfigure}[b]{0.42\textwidth}
        \centering
        \includegraphics[width=0.8\textwidth]{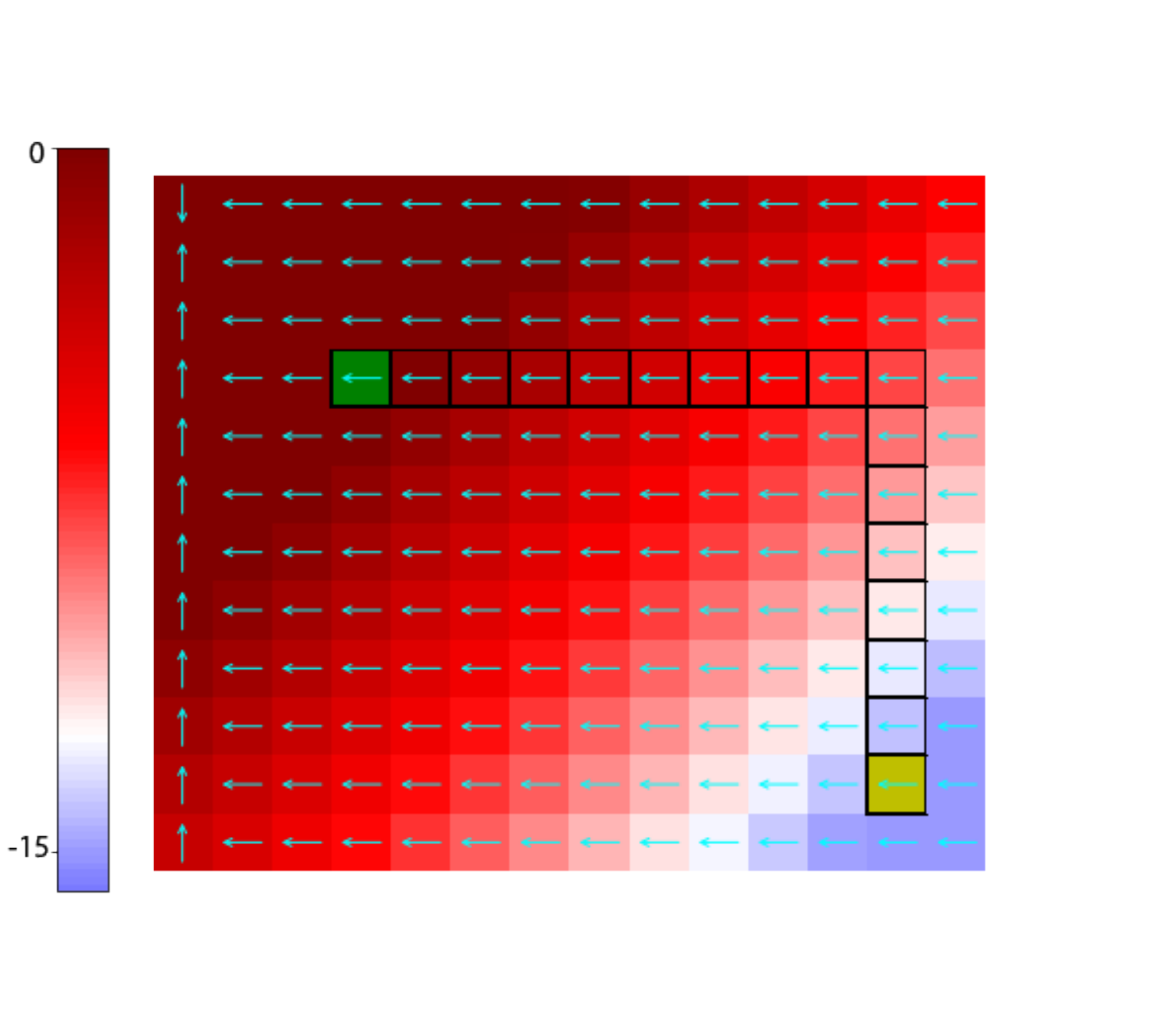}
        \vspace{-0.5cm}
        \caption{\small The value function learned from the standard Bellman equation (or supervised learning) on the demonstration states. The value function falsely extrapolates to the unseen states. For example, the top left corner has erroneously the largest value. As a result, once the policy induced by the value function makes a mistake, the error will compound.}
        \label{fig:wrong_exploration}
    \end{subfigure}
    \hspace{0.5cm}
    \begin{subfigure}[b]{0.42\textwidth}
        \centering
        \includegraphics[width=0.8\textwidth]{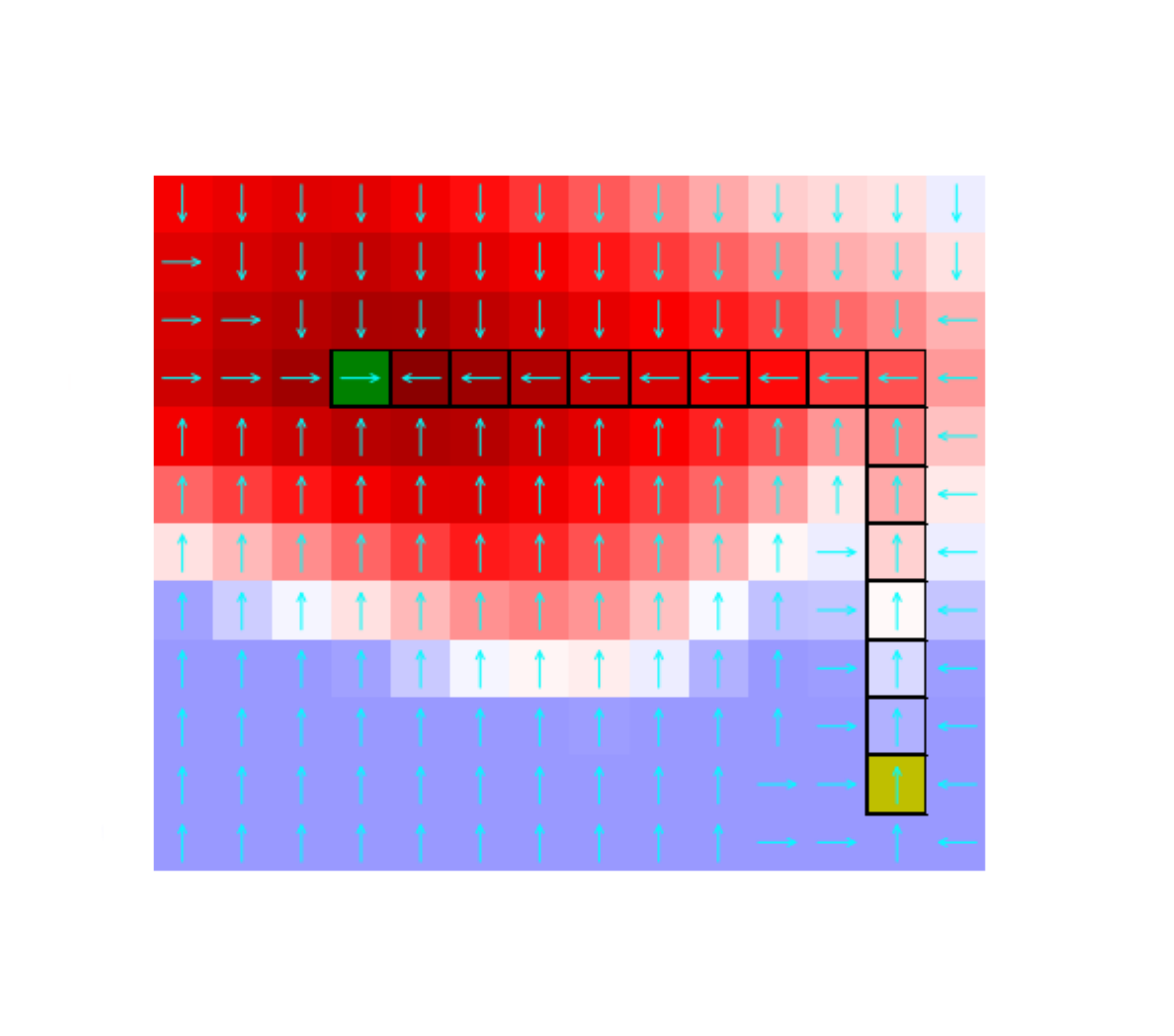}
        \vspace{-0.5cm}
        \caption{\small The conservatively-extrapolated value function (defined in equation~\eqref{eqn:requirement}) learned with negative sampling (\vins{}, Algorithm~\ref{alg:il} in Section~\ref{sec:algo}). The values at unseen states tend to be lower than their nearby states in the demonstrations, and therefore the corresponding policy tend to correct itself towards the demonstration trajectories.}
        \label{fig:toy_ns}
    \end{subfigure}
    \caption{\small {A toy environment where the agent aims to walk from a starting state (the yellow entry) to a goal state (the green entry). The reward is sparse: $R(s,a)=-1$ unless $s$ is at the goal (which is also the terminal state.)  The colors of the entries show the learned value functions. Entries in black edges are states in demonstrations. The cyan arrows show the best actions according to the value functions.}}\label{fig:toy}
\end{figure}

\section{Related Work}

\textbf{Imitation learning. }
Imitation learning is commonly adopted as a standard approach in robotics~\citep{pomerleau1989alvinn, schaal1997learning, argall2009survey, osa2017guiding, ye2017guided, aleotti2006grasp, lawitzky2012feedback, torabi2018behavioral, le2017coordinated, le2018hierarchical} and many other areas such as playing games~\citep{mnih2013playing}. Behavioral cloning~\citep{bc} is one of the underlying central approaches. See~\cite{osa2018algorithmic} for a thorough survey and more references therein. If we are allowed to access an expert policy (instead of trajectories) or an approximate value function, in the training time or in the phase of collecting demonstrations, then, stronger algorithms can be designed, such as DAgger~\citep{ross2011reduction}, AggreVaTe~\citep{ross2014reinforcement}, AggreVaTeD~\citep{sun2017deeply}, DART~\citep{laskey2017dart}, THOR~\cite{sun2018truncated}. Our setting is that we have only clean demonstrations trajectories and a sparse reward (but we still hope to learn the self-correctable policy.)

\citet{ho2016generative, wang2017robust,schroecker2018generative} successfully combine generative models in the setting where a large amount of environment interaction without rewards are allowed.  By contrast, we would like to minimize the amount of environment interactions needed, but are allowed to access a sparse reward. The work~\citep{schroecker2017state} also aims to learn policies that can stay close to the demonstration sets, but through a quite different approach of estimating the true MAP estimate of the policy. The algorithm also requires environment interactions, whereas one of our main goals is to improve upon behavioral cloning without any environment interactions.

Inverse reinforcement learning (e.g., see~\citep{abbeel2004apprenticeship, ng2000algorithms, ziebart2008maximum,finn2016connection,finn2016guided,fu2017learning}) is another important and successful line of ideas for imitation learning. It relates to our approach in the sense that it aims to learn a reward function that the expert is optimizing.
In contrast, we construct a model to learn the value function (of the trivial sparse reward $R(s,a)=-1$), rather than the reward function. Some of these works (e.g., \citep{finn2016connection,finn2016guided,fu2017learning}) use techniques that are reminiscent of negative sampling or  contrastive learning, although unlike our methods, they use ``negative samples'' that are sampled from the environments.

\textbf{Leveraging demonstrations for sample-efficient reinforcement learning. }
Demonstrations have been widely used to improve the efficiency of RL~\citep{kim2013learning, chemali2015direct, piot2014boosted}, and a common paradigm for continuous state and action space is
to initialize with RL algorithms with a good policy or $Q$ function~\citep{rajeswaran2017learning, nair2018overcoming,vevcerik2017leveraging, hester2018deep, gao2018reinforcement}. We experimentally compare with the previous state-of-the-art algorithm in~\cite{nair2018overcoming} on the same type of tasks. ~\citet{gao2018reinforcement} has introduced soft version of actor-critic to tackle the false extrapolation of $Q$ in the argument of $a$ when the action space is discrete. In contrast, we deal with the extrapolation of the states in a continuous state and action space.

\textbf{Model-based reinforcement learning.}  Even though we will learn a dynamical model in our algorithms, we do not use it to generate fictitious samples for planning. Instead, the learned dynamics are only used in combination with the value function to get a $Q$ function. Therefore, we do not consider our algorithm as model-based techniques. We refer to~\citep{kurutach2018model, clavera2018model, sun2018dual, chua2018deep, sanchez2018graph, pascanu2017learning, khansari2011learning,luo2018algorithmic} and the reference therein for recent work on model-based RL.

\textbf{Off-policy reinforcement learning}
There is a large body of prior works in the domain of off-policy RL, including extensions of policy gradient~\citep{gu2016q, degris2012off, wang2016sample} or Q-learning~\citep{watkins1992q, haarnoja2018soft, munos2016safe}. \cite{fujimoto2018off} propose to solve off-policy reinforcement learning by constraining the action space, and~\citet{fujimoto2018addressing} use double Q-learning~\citep{van2016deep} to alleviate the optimistic extrapolation issue.
In contrast, our method adjusts the erroneously extrapolated value function by explicitly penalizing the unseen states (which is customized to the particular demonstration off-policy data).
For most of the off-policy methods, their convergence are based on the assumption of visiting each state-action pair sufficiently many times.
In the learning from demonstration setting, the demonstrations states are highly biased or structured; thus off-policy method may not be able to learn much from the demonstrations.

\section{Problem Setup and Challenges}\label{sec:setup}

We consider a setting with a deterministic MDP with continuous state and action space, and sparse rewards. Let $\mathcal{S} = \R^d$ be the state space and $\mathcal{A} = \R^k$ be the action space, and let $M^\star : \R^d\times \R^k \rightarrow \R^d$ be the deterministic dynamics. At the test time, a random initial state $s_0$ is generated from some distribution $\cD_{s_0}$. We assume $\cD_{s_0}$ has a low-dimensional bounded support because typically initial states have special structures.  We aim to find a policy $\pi$ such that executing $\pi$ from state $s_0$ will lead to a set of goal states $\cG$. All the goal states are terminal states, and we run the policy for at most $T$ steps if none of the goal states is reached.

Let $\tau = (s_0, a_1,s_1,\dots, )$ be the trajectory obtained by executing a deterministic policy $\pi$ from $s_0$, where $a_t = \pi (s_t),$ and $s_{t+1} = M^\star(s_t,a_t)$.   The success rate of the policy $\pi$ is defined as
\begin{align}
\sr(\pi) = \Exp\left[\mathbbm{1}\{\exists t\le T, s_t\in \cG\}\right] \label{eqn:succ}
\end{align}
where the expectation is taken over the randomness of $s_0$. Note that the problem comes with a natural sparse reward: $R(s,a) = -1$ for every $s$ and $a$. This will encourage reaching the goal with as small number of steps as possible: the total payoff of a trajectory is equal to negative the number of steps if the trajectory succeeds, or $-T$ otherwise.

Let $\pe$  be an expert policy from which a set of $n$ demonstrations are sampled. Concretely, $n$ independent initial states $\{s_0^{(i)}\}_{i=1}^n$ from $\cD_{s_0}$ are generated, and the expert executes $\pe$ to collect  a set of $n$ trajectories $\{\tau^{(i)}\}_{i=1}^n$. We only have the access to the trajectories but not the expert policy itself.

We will design algorithms for two different settings:

\textbf{Imitation learning without environment interactions: } The goal  is to learn a policy $\pi$ from the demonstration trajectories $\{\tau^{(i)}\}_{i=1}^n$ without having any additional interactions with the environment.

\textbf{Leveraging demonstrations in reinforcement learning: } Here, in addition to the demonstrations, we can also interact with the environment (by sampling $s_0\sim \cD_{s_0}$ and executing a policy) and observe if the trajectory reaches the goal.  We aim is to minimize the amount of environment interactions by efficiently leveraging the demonstrations.

Let $\cM$ be the set of states that can be visited by the demonstration policy from a random state $s_0$ with positive probability.
Throughout this paper,  \textit{we consider the situation where the set $\cM$ is only a small subset or a low-dimensional manifold of the entire states space.} This is typical for continuous state space control problems in robotics, because the expert policy may only visit a very special kind of states that are the most efficient for reaching the goal. For example, in the toy example in Figure~\ref{fig:toy}, the set $\cM$ only contains those entries with black edges.\footnote{One may imagine that $\cM$ can be a more diverse set if the demonstrations are more diverse, but an expert will not visit entries on the top or bottom few rows, because they are not on any optimal routes to the goal state.}

To put our theoretical motivation in Section~\ref{sec:theory} into context, next we summarize a few challenges of imitation learning that are particularly caused by that $\cM$ is only a small subset of the state space.

\textbf{Cascading errors for behavioral cloning.} As pointed out by~\cite{bagnell2015invitation,ross2010efficient}, the errors of the policy can compound into a long sequence of mistakes and in the worst case cascade quadratically in the number of time steps $T$. From a statistical point of view, the fundamental issue is that the distribution of the states that a learned policy may encounter is different from the demonstration state distribution. Concretely,  the behavioral cloning $\pibc$ performs well on the states in $\cM$ but not on those states far away from $\cM$.
However, small errors of the learned policy can drive the state to leave $\cM$, and then the errors compound as we move further and further away from $\cM$.
As shown in Section~\ref{sec:theory}, our key idea is to design policies that correct themselves to stay close to the set $\cM$.

\textbf{Degeneracy in learning value or $Q$ functions from only demonstrations.}
When $\cM$ is a small subset or a low-dimensional manifold of the state space,  off-policy evaluation of $V^{\pe}$ and $Q^{\pe}$ is fundamentally problematic in the following sense. The expert policy $\pe$ is not uniquely defined outside $\cM$ because any arbitrary extension of $\pe$ outside $\cM$ would not affect the performance of the expert policy (because those states outside $\cM$ will never be visited by $\pe$ from $s_0\sim \cD_{s_0}$).  As a result, the value function $V^{\pe}$ and $Q^{\pe}$ is not uniquely defined outside $\cM$.  In Section~\ref{sec:theory}, we will propose a conservative extrapolation of the value function that encourages the policy to stay close to $\cM$.  Fitting $Q^{\pe}$ is in fact even more problematic. We refer to Section~\ref{sec:app:degen} for detailed discussions and why our approach can alleviate the problem.

\textbf{Success and challenges of initializing  RL with imitation learning.} A successful paradigm for sample-efficient RL is to initialize the RL policy by some coarse imitation learning algorithm such as BC~\citep{rajeswaran2017learning, vevcerik2017leveraging, hester2018deep,nair2018overcoming,gao2018reinforcement}. However, the authors suspect that the method can still be improved, because the value function or the $Q$ function are only randomly initialized so that many samples are burned to warm up them. As alluded before and shown in Section~\ref{sec:theory}, we will propose a way to learn a value function from the demonstrations so that the following RL algorithm can be initialized by a policy, value function, and $Q$ function (which is a composition of value and dynamical model) and thus converge faster.

\section{Theoretical motivations}\label{sec:theory}

\begin{figure}
     \centering
         \begin{minipage}[b]{0.45\textwidth}
        \includegraphics[width=\textwidth]{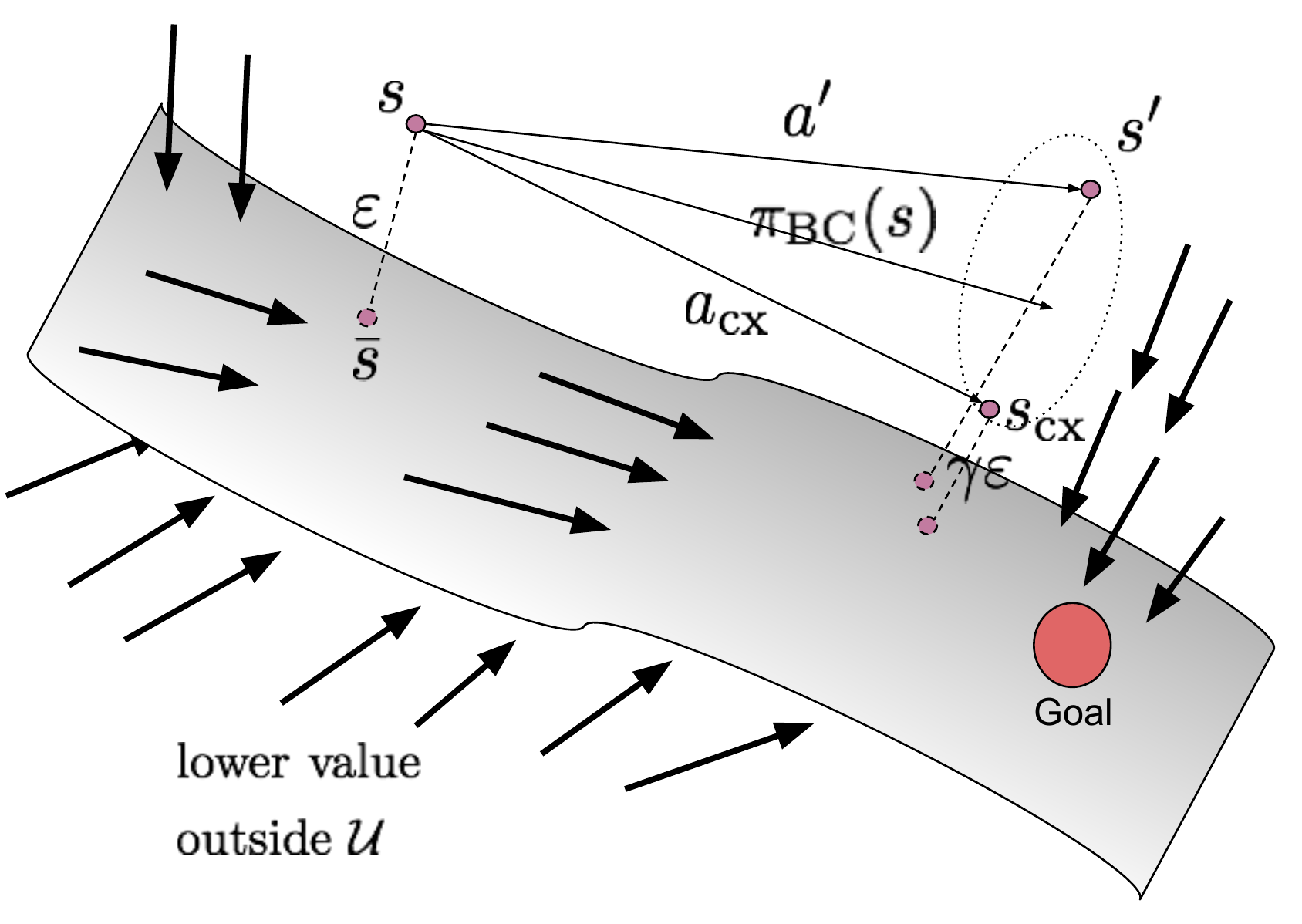}
        \vspace{.2in}
\end{minipage}
~~
\begin{minipage}[b]{0.45\textwidth}
        \caption{\small Illustration of the correction effect. A conservatively-extrapolated value function $V$, as shown in the figure, has lower values further away from $\cM$, and therefore the gradients of $V$ point towards $\cM$. With such a value function, suppose we are at state $s$ which is $\eps$-close to $\cM$. The locally-correctable assumption of the dynamics assumes the existence of $\ac$ that will drive us to state $\stc$ that is closer to $\cM$ than $s$. Since $\stc$ has a relatively higher value compared to other possible future states that are further away from $\cM$ (e.g., $s'$ shown in the figure), $\stc$ will be preferred by the optimization~\eqref{eqn:policy}. In other words, if an action $a$ leads to state $s$ with large distance to $\cM$, the action won't be picked by ~\eqref{eqn:policy} because it cannot beat $\ac.$ }        \label{fig:value_on_M}
        \end{minipage}
\end{figure}

In this section, we formalize our key intuition that the ideal extrapolation of the value function $V^{\pe}$ should be that the values should decrease as we get further and further from the demonstrations. Recall that we use $\cM$ to denote the set of states reachable by the expert policy from any initial state $s_0$ drawn with positive probability from $\cD_{s_0}$. \footnote{Recall that we assume that $D_{s_0}$ has a low-dimensional support and thus typically $\cM$ will also be a low-dimensional subset of the ambient space.} We use $\|\cdot\|$ to denote a norm in Euclidean space $\R^d$. Let $\Pi_{\cM}(s)$ be the projection of $s\in \R^d$ to a set $\cM\subset \R^d$ (according to the norm $\|\cdot\|$).

We introduce the notion of value functions with conservative extrapolation which matches $V^{\pe}$ on the demonstration states $\cM$ and has smaller values outside $\cM$.
As formally defined in equation~\eqref{eqn:req2} and~\eqref{eqn:requirement} in Alg.~\ref{algo:policy}, we extrapolate $V^{\pe}$ in a way that the value at $s\not\in \cM$ is decided by the value of its nearest neighbor in $\cM$ (that is $ V^{\pe}(\Pi_{\cM}(s)$), and its distance to the nearest neighbor (that is, $\|s- \Pi_{\cM}(s)\|$).  We allow a $\delta_V > 0$ error because exact fitting inside or outside $\cM$ would be impossible.

\begin{algorithm}\caption{Self-correctable policy induced from a value function with conservative extrapolation}\label{algo:policy}
    \textbf{Require: conservatively-extrapolated values} $V$ satisfying
    \begin{align}
    V(s)
    & = V^{\pe}(s) \pm \delta_V, & \textup{ if } s \in \cM \label{eqn:req2}\\
    V(s) & = V^{\pe}(\Pi_{\cM}(s)) - \lambda \|s- \Pi_{\cM}(s)\| \pm \delta_V& \textup{if }s\not \in \cM \label{eqn:requirement}
    \end{align}
    and a locally approximately correct dynamics $M$ and BC policy $\pibc$ satisfying Assumption~\eqref{ass:error}.

    \vspace{.1in}

    \textbf{Self-correctable policy $\pi$:}
    \begin{align}
    \pi(s) \triangleq \argmax_{a: \|a-\pibc(s)\|\le \zeta} ~V(M(s, a)) \label{eqn:policy}
    \end{align}
\end{algorithm}

Besides a conservatively-extrapolated value function $V$, our Alg.~\ref{algo:policy} relies on a learned dynamical model $M$ and a behavioral cloning policy $\pibc$. With these, the policy returns the action with the maximum value of the predicted next state in around the action of the BC policy. In other words, the policy $\pi$ attempts to re-adjust the BC policy locally by maximizing the value of the next state.

Towards analyzing Alg.~\ref{algo:policy}, we will make a few assumptions. We first assume that the BC policy is correct in the set $\cM$, and the dynamical model $M$ is locally correct around the set $\cM$ and the BC actions. Note that these are significantly weaker than assuming that the BC policy is globally correct (which is impossible to ensure) and that the model $M$ is globally correct.

\begin{assumption}[Local errors in learned dynamics and BC policy]\label{ass:error}
    We assume the BC policy $\pibc$ makes at most $\delta_\pi$ error in $\cM$: for all $s\in \cM$, we have $\|\pibc(s) -\pe(s)\|\leq \delta_\pi$.
    We also assume that the learned dynamics $M$ has $\delta_M$ error locally around $\cM$ and the BC actions in the sense that for all $s$ that is $\eps$-close to $\cM$, and any action that is $\zeta$-close to $\pibc(s)$, we have
$
    \|M(s,a) - M^\star(s,a)\|\le \delta_M
$.
\end{assumption}

We make another crucial assumption on the stability/correctability of the true dynamics. The following assumption essentially says that if we are at a state that is near the demonstration set, then there exists an action that can drive us closer to the demonstration set. This assumption rules out certain dynamics that does not allow corrections even after the policy making a small error. For example, if a robot, unfortunately, falls off a cliff, then fundamentally it cannot recover itself --- our algorithm cannot deal with such pathological situations.
\begin{assumption}[Locally-correctable dynamics] \label{ass:correctable}For some $\gamma\in (0,1)$ and $\eps > 0, L_c > 0$, we assume that the dynamics $M^\star$ is $(\gamma, L_c,\eps)$-locally-correctable w.r.t to the set $\cM$ in the sense that  for all $\eps_0 \in (0,\eps]$ and any tuple $(\bar{s},\bar{a},\bar{s}')$ satisfying $\bar{s},\bar{s}'\in \cM$ and $\bar{s}'=M^\star(\bar{s},\bar{a})$, and any $\eps_0$-perturbation $s$ of $\bar{s}$ (that is, $s\in N_{\eps_0}(\bar{s})$), there exists an action $\ac$ that is $L_c\eps_0$ close to $\bar{a}$, such that it makes a correction in the sense that the resulting state $s'$ is $\gamma \eps_0$-close to the set $\cM$:
$
    s' = M^\star(s,\ac) \in N_{\gamma\eps_0}(\cM).
$
Here $N_{\delta}(K)$ denotes the set of points that are $\delta$-close to $K$.
\end{assumption}

Finally, we will assume the BC policy, the value function, and the dynamics are all Lipschitz in their arguments.\footnote{We note that technically  when the reward function is $R(s,a)=-1$, the value function is not Lipschitz. This can be alleviated by considering a similar reward $R(s,a)  = -\alpha - \beta \|a\|^2$ which does not require additional information.} We also assume the projection operator to the set $\cM$ is locally Lipschitz. These are regularity conditions that provide loose local extrapolation of these functions, and they are satisfied by parameterized neural networks that are used to model these functions.
\begin{assumption}[Lipschitz-ness of policy, value function, and dynamics]\label{ass:lipschitz}
    We assume that the policy $\pibc$  is $L_\pi$-Lipschitz. That is, $\|\pibc(s) - \pibc(\tilde{s})\|\le L_\pi \|s-\tilde{s}\|$ for all $s,\tilde{s}$. We assume the value function $V^{\pe}$ and the learned value function $V$ are $L_V$-Lipschitz, the model $M^\star$ is $L_{M,a}$-Lipschitz w.r.t to the action and $L_{M,s}$-Lipschitz w.r.t to the state $s$.  We also assume that the set $\cM$ has $L_\Pi$-Lipschitz projection locally: for all $s, \hat{s}$ that is $\eps$-close to $\cM$, $\|\Pi_{\cM}(s) - \Pi_{\cM}(\hat{s})\|\le L_\Pi \|s-\hat{s}\|$.
\end{assumption}

Under these assumptions, now we are ready to state our main theorem. It claims that 1) the induced policy $\pi$ in Alg.~\ref{algo:policy} stays close to the demonstration set and performs similarly to the expert policy $\pe$, and 2) following the induced policy $\pi$, we will arrive at a state with a near-optimal value.
\begin{theorem}\label{thm:main}
	Suppose Assumption~\ref{ass:error},~\ref{ass:correctable},~\ref{ass:lipschitz} hold with sufficiently small $\eps> 0$ and errors $\delta_M$, $\delta_\pi$,$\delta_\pi > 0$ so that they satisfy $\zeta \ge L_c\eps + \delta_\pi + L_\pi$. Let $\lambda$ be sufficiently large so that $\lambda \ge \frac{2L_VL_{\Pi}L_M\zeta + 2\delta_V + 2L_V\delta_M}{(1-\gamma)\eps}$. Then, the policy $\pi$ from equation~\eqref{eqn:policy} satisfies the following:
\begin{itemize}
    \item[1.]  Starting from $s_0\in \cM$ and executing policy $\pi$ for $T_0\le T$ steps, the resulting states $s_1,\dots, s_{T_0}$ are all $\eps$-close to the demonstrate states set $\cM$.
    \item[2.] In addition, suppose the expert policy makes at least $\rho$ improvement every step in the sense that for every $s\in \cM$, either $V^{\pe}(M^\star(s,\pe(s))) \ge V^{\pe}(s) +\rho$ or $M^\star(s,\pe(s))$ reaches the goal.\footnote{$\rho$ is 1 when the reward is always $-1$ before achieving the goal.} Assume $\eps$ and $\delta_M, \delta_V,\delta_\pi$  are small enough so that they satisfy $\rho\gtrsim \eps+\delta_\pi$.

    Then, the policy $\pi$ will achieve a state $s_T$ with $T\leq 2|V^{\pe}(s_0)|/\rho$ steps which is $\eps$-close to a state $\bar{s}_T$ with value at least $V^{\pe}(s_T) \gtrsim - (\eps+\delta_\pi)$.\footnote{Here $\gtrsim$ hides multiplicative constant factors depending on the Lipschitz parameters $L_{M,a}, L_{M,s}, L_\pi, L_V$. }
\end{itemize}
\end{theorem}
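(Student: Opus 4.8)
\emph{Part 1 (staying $\eps$-close to $\cM$).} I would argue by induction on the time step that every $s_t$ is $\eps$-close to $\cM$; the base case $s_0\in\cM$ is trivial. For the inductive step, assume $s_t\in N_\eps(\cM)$ and set $\eps_0=\|s_t-\Pi_{\cM}(s_t)\|\le\eps$, $\bar s_t=\Pi_{\cM}(s_t)$, $\bar a_t=\pe(\bar s_t)$, $\bar s_{t+1}=M^\star(\bar s_t,\bar a_t)\in\cM$. Applying Assumption~\ref{ass:correctable} to the tuple $(\bar s_t,\bar a_t,\bar s_{t+1})$ with the $\eps_0$-perturbation $s_t$ yields a correcting action $\ac$ with $\|\ac-\bar a_t\|\le \Lc\eps_0$ and $M^\star(s_t,\ac)\in N_{\gamma\eps_0}(\cM)$; a triangle inequality through $\bar a_t$ and $\pibc(\bar s_t)$, using Assumptions~\ref{ass:error} and~\ref{ass:lipschitz}, gives $\|\ac-\pibc(s_t)\|\le\Lc\eps+\delta_\pi+L_\pi\eps\le\zeta$, so $\ac$ is feasible in~\eqref{eqn:policy} and hence $V(M(s_t,\pi(s_t)))\ge V(M(s_t,\ac))$. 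Using $\delta_M$-accuracy of $M$ near $\cM$, the conservative-extrapolation identities~\eqref{eqn:req2}--\eqref{eqn:requirement}, and the Lipschitz bounds, I would lower-bound $V(M(s_t,\ac))\ge V^{\pe}(\bar s_{t+1})-O(L_VL_\Pi L_M\eps)-\lambda(\gamma\eps_0+\delta_M)-\delta_V$, and for an \emph{arbitrary} feasible $a$ — writing $d$ for the distance of $M^\star(s_t,a)$ to $\cM$ — upper-bound $V(M(s_t,a))\le V^{\pe}(\bar s_{t+1})+O(L_VL_\Pi L_M\zeta)-\lambda(d-\delta_M)+\delta_V$, so that the penalty term in~\eqref{eqn:requirement} is precisely what makes far-away next states unattractive. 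Taking $a=\pi(s_t)$ and subtracting gives $\lambda d\le\lambda\gamma\eps_0+O(L_VL_\Pi L_M\zeta)+2\delta_V+O(L_V\delta_M)$; dividing by $\lambda$ and invoking $\lambda\ge\frac{2L_VL_\Pi L_M\zeta+2\delta_V+2L_V\delta_M}{(1-\gamma)\eps}$ yields $d\le\gamma\eps_0+(1-\gamma)\eps\le\eps$, up to lower-order $\delta_M$-slack absorbed by the ``sufficiently small errors'' hypothesis, which closes the induction.

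\emph{Part 2 (per-step progress).} Since Part~1 guarantees $s_t\in N_\eps(\cM)$ throughout, I would track the potential $\Phi_t:=V(s_t)$, which by~\eqref{eqn:req2}--\eqref{eqn:requirement} equals $V^{\pe}(\Pi_{\cM}(s_t))-\lambda\|s_t-\Pi_{\cM}(s_t)\|\pm\delta_V$. Reusing the correcting action $\ac$: if $\bar s_{t+1}=M^\star(\bar s_t,\pe(\bar s_t))$ is not a goal, the improvement assumption gives $V^{\pe}(\bar s_{t+1})\ge V^{\pe}(\bar s_t)+\rho$; combining $V(s_{t+1})\ge V(M(s_t,\ac))-L_V\delta_M$ with the lower bound on $V(M(s_t,\ac))$ from Part~1 and with $\Phi_t=V^{\pe}(\bar s_t)-\lambda\eps_0\pm\delta_V$, the key cancellation $-\lambda\gamma\eps_0+\lambda\eps_0=\lambda(1-\gamma)\eps_0\ge0$ leaves $\Phi_{t+1}\ge\Phi_t+\rho-O(\eps+\delta_V+\lambda\delta_M)\ge\Phi_t+\rho/2$ once the errors are small enough (this is where $\rho\gtrsim\eps+\delta_\pi$ and the smallness of $\delta_M$ relative to $1/\lambda$ are used). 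If instead $\bar s_{t+1}$ is a goal, the same chain gives $V(s_{t+1})\gtrsim V^{\pe}(\bar s_{t+1})$, which is near-maximal, so $V^{\pe}(\Pi_{\cM}(s_{t+1}))\gtrsim-\delta_V$ and we stop.

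\emph{Part 2 (conclusion).} Since $\Phi_0=V^{\pe}(s_0)\pm\delta_V$ and $\Phi_t\le\delta_V$ always (because $V^{\pe}\le0$ and the penalty is nonpositive), the potential cannot gain $\rho/2$ per step indefinitely: within $T\le 2|V^{\pe}(s_0)|/\rho$ steps the non-goal branch must fail, i.e.\ some $\Pi_{\cM}(s_t)$ satisfies $V^{\pe}(\Pi_{\cM}(s_t))>-\rho$, which for the sparse reward forces $\Pi_{\cM}(s_t)$ into $\cG$ up to the error terms; unwinding $\Phi_T\gtrsim-\delta_V$ back through~\eqref{eqn:requirement} produces a state $s_T$ that is $\eps$-close to $\bar s_T:=\Pi_{\cM}(s_T)$ with $V^{\pe}(\bar s_T)\gtrsim-(\eps+\delta_\pi)$, as claimed.

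\emph{Main obstacle.} The conceptually delicate choice is to use $V(s_t)$ itself — \emph{including} its $-\lambda\|s_t-\Pi_{\cM}(s_t)\|$ penalty — as the potential in Part~2: tracking $V^{\pe}(\Pi_{\cM}(s_t))$ alone fails, since the $\lambda\gamma\eps_0$ penalty incurred by the corrected next state would swamp the $\rho$ progress, whereas with the right potential this penalty is offset by the one already paid at $s_t$, leaving net gain $\approx\rho$. Everything else is bookkeeping: carrying all of $\delta_V,\delta_M,\delta_\pi,\eps$ and the Lipschitz constants through both chains of triangle inequalities so that the stated threshold on $\lambda$ (whose factor $2$ leaves room for the $\delta_M$-slack) and the stated step count $2|V^{\pe}(s_0)|/\rho$ come out precisely.
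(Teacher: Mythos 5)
Your proposal is correct, and Part~1 is essentially the paper's own argument (the inductive step is exactly Lemma~\ref{lem:staying_close}: construct the correcting action $\ac$, verify feasibility via the triangle inequality $\|\ac-\pibc(s)\|\le L_c\eps+\delta_\pi+L_\pi\eps\le\zeta$, compare $V(M(s,\pi(s)))\ge V(M(s,\ac))$, and unwind through \eqref{eqn:req2}--\eqref{eqn:requirement} to get $\lambda\|s'-\Pi_{\cM}s'\|\le\lambda\gamma\eps+\kappa$). Part~2, however, takes a genuinely different route. The paper never touches the learned value function $V$, the penalty $\lambda$, or the correcting action in Part~2: it tracks precisely the potential $V^{\pe}(\Pi_{\cM}(s_t))$ that you dismiss, and shows directly that the projected trajectory shadows an expert step --- since $\|\pi(s_t)-\pe(\bar s_t)\|\le\zeta+L_\pi\eps+\delta_\pi$, Lipschitzness of $M^\star$ and of the projection give $\|\Pi_{\cM}(s_{t+1})-M^\star(\bar s_t,\pe(\bar s_t))\|\le L_\Pi(L_{M,s}\eps+L_{M,a}(\zeta+L_\pi\eps+\delta_\pi))$, whence $V^{\pe}(\Pi_{\cM}(s_{t+1}))\ge V^{\pe}(\Pi_{\cM}(s_t))+\rho-O(\eps+\delta_\pi)$ with no $\lambda$ anywhere. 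So your ``main obstacle'' claim --- that tracking $V^{\pe}(\Pi_{\cM}(s_t))$ fails because the $\lambda\gamma\eps_0$ penalty swamps $\rho$ --- is mistaken: that penalty only enters if you insist on routing the progress bound through $V$ and $\ac$, which the paper avoids. Your alternative potential $\Phi_t=V(s_t)$ does also work, via the cancellation $\lambda(1-\gamma)\eps_0\ge 0$ you identify, but it buys this at the cost of an extra smallness condition ($\delta_M\lesssim\rho/\lambda$, unless you replace the $\lambda\delta_M$ slack by $L_V\delta_M$ using Lipschitzness of $V$ as in the paper's equations \eqref{eqn:3}--\eqref{eqn:4}) and of re-importing the whole machinery of Lemma~\ref{lem:staying_close} into Part~2; the paper's shadowing argument is more elementary and decouples the two bullets cleanly. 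The final counting step ($T\le 2|V^{\pe}(s_0)|/\rho$ because the potential is bounded above by roughly $0$ and gains $\rho/2$ per non-terminal step) is the same in both.
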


The proof is deferred to Section~\ref{sec:theory-app}. The first bullet follows inductively invoking the following lemma which states that if the current state is $\eps$-close to $\cM$, then so is the next state. The proof of the Lemma is the most mathematically involved part of the paper and is deferred to the Section~\ref{sec:theory-app}. We demonstrate the key idea of the proof in Figure~\ref{fig:value_on_M} and its caption.
\begin{lemma}\label{lem:staying_close}
    In the setting of Theorem~\ref{thm:main}, suppose $s$ is $\eps$-close to the demonstration states set $\cM$. Suppose $\cM$, and let $a = \pi(s)$ and $s' = M^\star(s,a)$. Then, $s'$ is also $\eps$-close to the set $\cM$.
\end{lemma}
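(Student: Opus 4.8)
The plan is to play the action $a=\pi(s)$ chosen by~\eqref{eqn:policy} against the ``correction action'' guaranteed by Assumption~\ref{ass:correctable}, and then use optimality of $\pi$ together with the shape of the conservatively-extrapolated $V$ to conclude that $s'=M^\star(s,a)$ cannot be far from $\cM$. Write $\bar s=\Pi_\cM(s)$, $\eps_0=\|s-\bar s\|\le\eps$ (if $\eps_0=0$ the same computation applies in the limit $\eps_0\to 0^+$, so assume $\eps_0\in(0,\eps]$), $\bar a=\pe(\bar s)$, and $\bar s'=M^\star(\bar s,\bar a)\in\cM$. By Assumption~\ref{ass:correctable} there is an action $\ac$ with $\|\ac-\bar a\|\le L_c\eps_0$ and $\stc:=M^\star(s,\ac)\in N_{\gamma\eps_0}(\cM)$.

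First I would verify $\ac$ is feasible in~\eqref{eqn:policy}: by the triangle inequality, Assumption~\ref{ass:error} (giving $\|\pe(\bar s)-\pibc(\bar s)\|\le\delta_\pi$) and $L_\pi$-Lipschitzness of $\pibc$,
\[
\|\ac-\pibc(s)\|\le\|\ac-\bar a\|+\|\pe(\bar s)-\pibc(\bar s)\|+\|\pibc(\bar s)-\pibc(s)\|\le L_c\eps_0+\delta_\pi+L_\pi\eps_0\le\zeta,
\]
using the hypothesis $\zeta\ge L_c\eps+\delta_\pi+L_\pi$ and $\eps_0\le\eps\le 1$. Hence, since $a=\pi(s)$ maximizes $V(M(s,\cdot))$ over this feasible set, $V(M(s,a))\ge V(M(s,\ac))$. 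Because $s$ is $\eps$-close to $\cM$ and both $a,\ac$ are $\zeta$-close to $\pibc(s)$, Assumption~\ref{ass:error} also gives $\|M(s,a)-s'\|\le\delta_M$ and $\|M(s,\ac)-\stc\|\le\delta_M$; combined with $L_V$-Lipschitzness of $V$ this upgrades the last inequality to $V(s')\ge V(\stc)-2L_V\delta_M$.

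Now I would apply the conservative-extrapolation property, which in the form~\eqref{eqn:req2}--\eqref{eqn:requirement} reads $V(x)=V^{\pe}(\Pi_\cM(x))-\lambda\|x-\Pi_\cM(x)\|\pm\delta_V$ uniformly for $x$ near $\cM$ (the case $x\in\cM$ being the special case $\|x-\Pi_\cM(x)\|=0$). Using the upper bound at $x=s'$, the lower bound at $x=\stc$, and $\|\stc-\Pi_\cM(\stc)\|\le\gamma\eps_0$, I obtain
\[
\lambda\,\|s'-\Pi_\cM(s')\|\le\lambda\gamma\eps_0+\big(V^{\pe}(\Pi_\cM(s'))-V^{\pe}(\Pi_\cM(\stc))\big)+2\delta_V+2L_V\delta_M .
\]
Bounding the middle term by $L_V\|\Pi_\cM(s')-\Pi_\cM(\stc)\|\le L_VL_\Pi\|s'-\stc\|\le L_VL_\Pi L_{M,a}\|a-\ac\|\le 2L_VL_\Pi L_{M,a}\zeta$ (local Lipschitz projection, $L_{M,a}$-Lipschitzness of $M^\star$ in the action, and $\|a-\ac\|\le\|a-\pibc(s)\|+\|\pibc(s)-\ac\|\le2\zeta$), then dividing by $\lambda$, using $\eps_0\le\eps$ and the hypothesis $\lambda\ge(2L_VL_\Pi L_{M,a}\zeta+2\delta_V+2L_V\delta_M)/((1-\gamma)\eps)$, gives $\|s'-\Pi_\cM(s')\|\le\gamma\eps+(1-\gamma)\eps=\eps$.

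The step I expect to be the main obstacle is invoking the local Lipschitz-projection bound at $s'$: this presupposes $s'$ is already $\eps$-close to $\cM$, which is precisely the conclusion. I would close this loop either by contradiction --- assume $\|s'-\Pi_\cM(s')\|>\eps$ and run the same chain of inequalities to reach $\|s'-\Pi_\cM(s')\|\le\eps$, a contradiction --- or by first extracting from $\stc\in N_{\gamma\eps_0}(\cM)$ and $\|s'-\stc\|\le L_{M,a}\|a-\ac\|$ a coarse a-priori bound placing $s'$ inside the neighborhood of $\cM$ on which the regularity assumptions are in force, before running the refined argument. The remaining work is purely bookkeeping: checking that, for $\eps,\delta_M,\delta_\pi,\delta_V$ small enough and with the stated lower bounds on $\lambda$ and $\zeta$, every triangle-inequality estimate above and the feasibility check for $\ac$ go through.
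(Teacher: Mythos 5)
Your proposal is correct and follows essentially the same route as the paper's proof: exhibit the correcting action $\ac$ from Assumption~\ref{ass:correctable}, verify it is feasible in~\eqref{eqn:policy}, compare $V(M(s,a))$ against $V(M(s,\ac))$, and translate the resulting inequality through the conservative-extrapolation conditions~\eqref{eqn:req2}--\eqref{eqn:requirement} and the Lipschitz/model-error bounds into $\lambda\|s'-\Pi_\cM s'\|\le\lambda\gamma\eps+\kappa$. The circularity you flag in applying the local Lipschitz-projection bound at $s'$ is real and is present (silently) in the paper's own proof as well; your suggested fix via an a-priori bound $\|s'-\stc\|\le 2L_{M,a}\zeta$ placing $s'$ in the relevant neighborhood is the right way to close it.
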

We effectively represent the $Q$ function by $V(M(s,a))$ in Alg.~\ref{algo:policy}. We argue in Section~\ref{sec:degenerancy} that this helps address the degeneracy issue when there are random goal states (which is the case in our experiments.)

{\bf Discussion: can we learn conservatively-extrapolated $Q$-function?}
~~~ We remark that we do not expect a conservative-extrapolated $Q$-functions would be helpful. The fundamental idea here is to penalize the value of unseen states so that the policy can self-correct. However, to learn a $Q$ function that induces self-correctable policies, we should \textit{encourage} unseen actions that can correct the trajectory, instead of penalize them just because they are not seen before. Therefore, it is crucial that the penalization is done on the unseen states (or $V$) but not the unseen actions (or $Q$).

\section{Main Approach} \label{sec:algo}

\textbf{Learning value functions with negative sampling from demonstration trajectories.} As motivated in Section~\ref{sec:theory} by Algorithm~\ref{algo:policy} and Theorem~\ref{thm:main}, we first develop a practical method that can learn a value function with conservative extrapolation, without environment interaction. 
Let $V_{\phi}$ be a value function parameterized by $\phi$. Using the standard TD learning  loss, we can ensure the value function to be accurate on the demonstration states $\cM$ (i.e., to satisfy equation~\eqref{eqn:req2}). Let $\bar{\phi}$ be the target value function,\footnote{A target value function is widely used in RL to improve the stability of the training~\citep{ddpg, dqn}.} the TD learning loss is defined as
$
\calL_{td}(\phi) = \bbE_{(s, a, s') \sim \rho^{\pe}} \left[\left(r(s, a) + V_{\bar \phi}(s') - V_\phi(s) \right)^2 \right]
$
where $r(s,a)$ is the (sparse) reward, $\bar \phi$ is the parameter of the target network, $\rho^{\pe}$ is the distribution of the states-action-states tuples of the demonstrations. 
The crux of the ideas in this paper is to use a negative sampling technique to enforce the value function to satisfy  conservative extrapolation requirement~\eqref{eqn:requirement}. It would be infeasible to enforce condition~\eqref{eqn:requirement} for every $s\not \in \cM$. Instead, we draw random ``negative samples'' $\tilde{s}$ from the neighborhood of $\cM$, and enforce the condition~\eqref{eqn:requirement}. This is inspired by the negative sampling approach widely used in NLP for training word embeddings~\cite{mikolov2013distributed,gutmann2012noise}. Concretely, we draw a sample $s\sim \rho^{\pe}$, create a random perturbation of $s$ to get a point $\tilde{s}\not\in \cM$. and construct the following loss function:\footnote{With slight abuse of notation, we use $\rho^{\pe}$ to denote both the distribution of $(s,a,s')$ tuple and the distribution of $s$ of the expert trajectories.}
$$
\calL_{ns}(\phi)= \bbE_{s \sim \rho^{\pe}, \tilde{s} \sim \perturb(s)} \left(V_{\bar \phi}(s) - \lambda \|s-\tilde{s}\|- V_\phi(\tilde{s}) \right)^2,
$$
The rationale of the loss function can be best seen in the situation when  $\cM$ is assumed to be a low-dimensional manifold in a high-dimensional state space. In this case, $\tilde{s}$ will be outside the manifold $\cM$ with probability 1. Moreover, the random direction $\tilde{s}-s$ is likely to be almost orthogonal to the tangent space of the manifold $\cM$, and thus $s$ is a reasonable approximation of the projection of $\tilde{s}$ back to the $\cM$, and $\|s-\tilde{s}\|$ is an approximation of $\|\Pi_{\cM}\tilde{s} - \tilde{s}\|$.  If $\cM$ is not a manifold but a small subset of the state space, these properties may still likely to hold for a good fraction of $s$.

We only attempt to enforce condition~\eqref{eqn:requirement} for states near $\cM$. This likely suffices because the induced policy is shown to always stay close to $\cM$.
Empirically, we perturb $s$ by adding a Gaussian noise. 
The loss function to learn $V_\phi$ is defined as $\calL(\phi) = \calL_{td}(\phi) + \mu \calL_{ns}(\phi)$ for some constant $\mu > 0$. For a mini-batch $\calB$ of data, we define the corresponding empirical loss by $\calL(\phi; \calB)$ (similarly we define $\calL_{td}(\phi;\calB)$ and $\calL_{ns}(\phi;\calB)$).
The concrete iterative learning algorithm is described in line \ref{begin}-\ref{end} of Algorithm~\ref{alg:il} (except line \ref{model} is for learning the dynamical model, described below.)

\begin{algorithm}[t]
    \caption{Value Iteration on Demonstrations with Negative Sampling  (\vins{})}
    \label{alg:il}
    \begin{algorithmic}[1]
        \State \label{begin}$\calR \gets$ demonstration trajectories \Comment{No environment interaction will be used}
        \State Initialize value parameters $\bar{\phi}=\phi$ and model parameters $\theta$ randomly
        \For{$i = 1, \dots, T$}
        \State sample mini-batch $\calB$ of $N$ transitions $(s, a, r, s')$ from $\calR$
        \State update $\phi$ to minimize $\calL_{td}(\phi; \calB) + \calL_{ns}(\phi; \calB)$
        \State \label{model} update $\theta$ to minimize loss $\calL_\text{model}(\theta; \calB)$
        \State \label{end} update target network: $\bar \phi \gets \bar \phi + \tau (\phi - \bar \phi)$
        \EndFor\\

        \Function{Policy}{$s$}
        \State \label{policy:begin} \label{line:option}Option 1: $a = \pibc(s)$; Option 2: $a=0$
        \State sample $k$ noises $\xi_{1}, \dots, \xi_{k}$ from $\text{Uniform}[-1, 1]^m$
        \Comment{$m$ is the dimension of action space}
        \State \label{dyn}$i^* = \argmax_i V_\phi(M_\theta(s, a + \alpha \xi_i))$
        \Comment{$\alpha > 0$ is a hyper-parameter}
        \State \label{policy:end}\Return $a + \alpha \xi_{i^*}$
        \EndFunction
    \end{algorithmic}
\end{algorithm}

\textbf{Learning the dynamical model.} We use standard supervised learning to train the model. We use $\ell_2$ norm as the loss for model parameters $\theta$ instead of the more commonly used MSE loss, following the success of~\citep{luo2018algorithmic}: $
\Lmodel(\theta) = \bbE_{(s, a, s') \sim \rho^{\pe}} \left[ \|M_\theta (s, a) - s'\|_2 \right].
$

\textbf{Optimization for policy.}
We don't maintain an explicit policy but use an induced policy from $V_\phi$ and $M_\theta$ by optimizing equation~\eqref{eqn:policy}. A natural choice would be using projected gradient ascent to optimize equation~\eqref{eqn:policy}. However, we found the random shooting suffices because the action space is relatively low-dimensional in our experiments. Moreover, the randomness introduced appears to reduce the overfitting of the model and value function slightly. As shown in line ~\ref{policy:begin}-\ref{policy:end} of Alg.~\ref{alg:il}, we sample $k$ actions in the feasible set and choose the one with maximum $V_\phi(M_\theta(s, a))$.

\textbf{Value iteration with environment interaction.} As alluded before, when more environment interactions are allowed, we initialize an RL algorithm by the value function, dynamics learned from~\vins{}. Given that we have $V$ and $M$ in hand, we alternate between fitted value iterations for updating the value function and supervised learning for updating the models. (See Algorithm~\ref{alg:il-rl} in Section~\ref{sec:app:vinsrl}.)
We do not use negative sampling here since the RL algorithms already collect bad trajectories automatically. We also do not hallucinate any goals as in HER~\citep{HER}.

\section{Experiments}
\textbf{Environments.}
We evaluate our algorithms in three simulated robotics environments\footnote{Available at \url{https://github.com/openai/gym/tree/master/gym/envs/robotics}. } designed by~\citep{robotics} based on OpenAI Gym \citep{gym} and MuJoCo \citep{mujoco}: Reach, Pick-And-Place, and Push. A detailed description can be found in Section~\ref{sec:exp_app:setup}.

\textbf{Demonstrations.} For each task, we use Hindsight Experience Replay (HER)~\citep{HER} to train a policy until convergence.  The policy rolls out to collect 100/200 successful trajectories as demonstrations except for Reach environment where 100 successful trajectories are sufficient for most of the algorithms to achieve optimal policy. We filtered out unsuccessful trajectories during data collection.

We consider two settings: imitation learning from only demonstrations data, and leveraging demonstration in RL with a \textit{limited} amount of interactions. We compare our algorithm with Behavioral Cloning and multiple variants of our algorithms in the first setting. We compare with the previous state-of-the-art by~\citet{nair2018overcoming}, and GAIL~\cite{ho2016generative} in the second setting. We do not compare with~\citep{gao2018reinforcement} because it cannot be applied to the case with continuous actions.

\textbf{Behavioral Cloning~\citep{bc}.}
Behavioral Cloning (BC) learns a mapping from a state to an action on demonstration data using supervised learning. We use MSE loss for predicting the actions.

\textbf{Nair \emph{et al.}'18~\citep{nair2018overcoming}.}
The previous state-of-the-art algorithm from~\cite{nair2018overcoming} combines HER~\citep{HER} with BC and a few techniques: 1) an additional replay buffer filled with demonstrations, 2) an additional behavioral cloning loss for the policy, 3) a $Q$-filter for non-optimal demonstrations, 4) resets to states in the demonstrations to deal with long horizon tasks. We note that reseting to an arbitrary state may not be realistic for real-world applications in robotics. In contrast, our algorithm does not require resetting to a demonstration state.

\textbf{GAIL \citep{ho2016generative}} Generative Adversarial Imitation Learning (GAIL) imitates the expert by matching the state-action distribution with a GAN-like framework.

\textbf{VINS.} As described in Section~\ref{sec:algo}, in the setting without environment interaction, we use Algorithm~\ref{alg:il}; otherwise we use it to initialize an RL algorithm (see Algorithm~\ref{alg:il-rl}). We use neural networks to parameterize the value function and the dynamics model. The granularity of the HER demonstration policy is very coarse, and we argument the data with additional linear interpolation between consecutive states. We also use only a subset of the states as inputs to the value function and the dynamics model, which apparently helps improve the training and generalization of them.
Implementation details can be found in Section~\ref{sec:exp_app:hyper}.


Our main results are reported in Table~\ref{table:il} \footnote{The standard deviation here means the standard deviation of average success rate over 10 (100 for Reach 10) different random seeds by the same algorithm.}
 for the setting with no environment interaction and Figure~\ref{fig:rl} for the setting with environment interactions.
Table~\ref{table:il} shows that the Reach environment is too simple so that we do not need to run the RL algorithm.
On the harder environments Pick-And-Place and Push, our algorithm VINS outperforms BC. We believe this is because our conservatively-extrapolated value function helps correct the mistakes in the policy.
Here we use 2k trials to estimate the success rate (so that the errors in the estimation is negligible), and we run the algorithms with 10 different seeds.  The error bars are for 1 standard deviation.

\begin{minipage}[t]{\textwidth}
    \begin{minipage}[b]{0.45 \textwidth}
        \centering
   \begin{tabular}{ccc}
        & VINS (ours) & BC  \\ \hline
        Reach 10 & $\bf 99.3 \pm 0.1\%$ & $98.6 \pm 0.1\%$ \\ \hline
        Pick 100 & $\bf 75.7 \pm 1.0\%$ & $66.8 \pm 1.1\%$ \\ \hline
        Pick 200 & $\bf 84.0 \pm 0.5\%$ & $82.0 \pm 0.8\%$ \\ \hline
        Push 100 & $\bf 44.0 \pm 1.5\%$ & $37.3 \pm 1.1\%$ \\ \hline
        Push 200 & $\bf 55.2 \pm 0.7\%$ & $51.3 \pm 0.6\%$ \\
    \end{tabular}
        \captionof{table}{The success rates of achieving the goals for~\vins{} and BC in the setting without any environment interactions. A random policy has about $5\%$ success rate at Pick and Push.}
    \label{table:il}
\end{minipage}
\hfill
\begin{minipage}[b]{0.53 \textwidth}
    \centering
    \includegraphics[width=\linewidth]{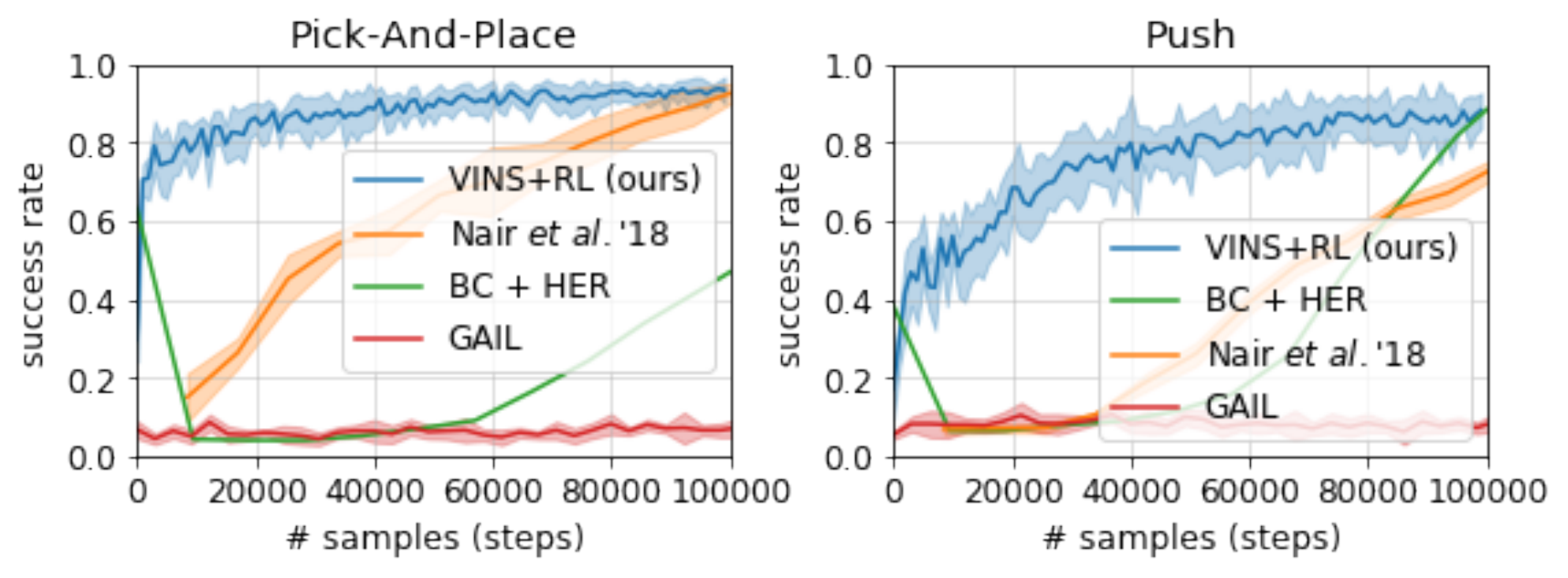}
        \captionof{figure}{The learning curves of \vins{}+RL vs the prior state-of-the-art Nair \emph{et al.}'18 on Pick-And-Place and Push. Shaded areas indicates one standard deviation estimated from 10 random seeds.\protect\footnotemark }
    \label{fig:rl}
\end{minipage}
\end{minipage}
\footnotetext{The curve for Nair et al.'s only starts after a few thousands steps because the code we use \url{https://github.com/jangirrishabh/Overcoming-exploration-from-demos} only evaluates after the first epoch.}

Figure~\ref{fig:rl} shows that \vins{} initialized RL algorithm outperforms prior state-of-the-art in sample efficiency. We believe the main reason is that due to the initialization of value and model, we pay less samples for warming up the value function. We note that our initial success rate in RL is slightly lower than the final result of~\vins{} in Table~\ref{table:il}. This is because in RL we implemented a slightly worse variant of the policy induced by~\vins{}: in the policy of Algorithm~\ref{alg:il}, we use option 2 by search the action uniformly. This suffices because the additional interactions quickly allows us to learn a good model and the BC constraint is no longer needed.

\textbf{Ablation studies. } Towards understanding the effect of each component
of~\vins{}, we perform three ablative experiments to show the importance of
negative sampling, searching in the neighborhood of Behavioral Cloned actions
(option 1 in line~\ref{line:option} or Algorithm~\ref{alg:il}), and a good
dynamics model. The results are shown in Table \ref{table:ablation}. We study
three settings: (1) VINS without negative sampling (VINS w/o NS), where the loss
$\calL_{ns}$ is removed; (2)  VINS without BC (VINS w/o BC), where option 2 in
line~\ref{line:option} or Algorithm~\ref{alg:il} is used; (3) VINS with oracle
model without BC (VINS w/ oracle w/o BC), where we use the true dynamics model
to replace line~\ref{dyn} of Algorithm~\ref{alg:il}. Note that the last setting
is only synthetic for ablation study because in the real-world we don't have
access to the true dynamics model.  Please see the caption of
Table~\ref{table:ablation} for more interpretations. We use the same set of
hyperparameters for the same environment, which may not be optimal: for example,
with more expert trajectories, the negative sampling loss $\Lns$, which can be
seen as a regularziation, should be assigned a smaller coefficient $\mu$.

\begin{table}[th]
	\centering
	\begin{tabular}{cccccc}
		& Pick 100  & Pick 200  & Push 100  & Push 200  \\ \hline
		BC & $66.8 \pm 1.1\%$ & $82.0 \pm 0.8\%$ & $37.3 \pm
		1.1\%$ & $51.3 \pm 0.6\%$ \\ \hline
		VINS             & $75.7 \pm 1.0\%$ & $84.0 \pm 0.5\%$ & $44.0 \pm 0.8\%$ & $55.2 \pm 0.7\%$ \\ \hline

		VINS w/o BC      &  $28.5 \pm 1.1\%$ & $43.6 \pm 1.2\%$ & $14.3 \pm 0.5\%$ & $24.9 \pm 1.3\%$ \\ \hline
		\makecell{
			VINS w/ oracle
			\\ w/o BC} & $51.4 \pm 1.4\%$      & $62.3 \pm 1.1\%$      & $40.7 \pm 1.4 \%$     & $42.9 \pm 1.3\%$      \\ \hline
		VINS w/ oracle   & $76.3 \pm 1.4\%$ & $87.0 \pm 0.7\%$ & $48.7 \pm 1.2\%$ & $63.8 \pm 1.3\%$ \\ \hline
		VINS w/o NS      &  $48.5 \pm 2.1\%$ & $71.6 \pm 0.9\%$ & $29.3 \pm 1.2\%$ & $38.7 \pm 1.5\%$ \\ \hline

	\end{tabular}
	\vspace{.05in}
	\caption{Ablation study of components of~\vins{} in the setting without environment interactions.
		We reported the average performance of 10 runs (with different random seeds) and the empirical standard deviation of the estimator of the average performance.
		The success rate of~\vins{} w/o NS is consistently worse than~\vins{}, which suggests that NS is crucial for tackling the false extrapolation.
		From comparisons between VINS w/o BC and VINS w/ oracle w/o BC, and between VINS and VINS w/ oracle,  we observe that if the learning of the dynamics can be improved (potentially by e.g., by collecting data with random actions), then VINS or VINS w/o BC can be improved significantly.
		We also suspect that the reason why we need to search over the neighborhood of BC actions is that the dynamics is not accurate at state-action pairs far away from the demonstration set (because the dynamics is only learned on the demonstration set.)}     \label{table:ablation}

\end{table}


\section{Conclusion}
We devise a new algorithm, VINS, that can learn self-correctable by learning value function and dynamical model from demonstrations. The key idea is a theoretical formulation of conservatively-extrapolated value functions that provably leads to self-correction. The empirical results show a promising performance of VINS and an algorithm that initializes RL with VINS. It's a fascinating direction to study other algorithms that may learn conservatively-extrapolated value functions in other real-world applications beyond the proof-of-concepts experiments in this paper. For example, the negative sampling by Gaussian perturbation technique in this paper may not make sense for high-dimensional pixel observation.  The negative sampling can perhaps be done in the representation space (which might be learned by unsupervised learning algorithms) so that we can capture the geometry of the state space better.

\section*{Acknowledgments}

We thank Emma Brunskill,  Shane Gu, and Yuandong Tian for helpful discussions at the various stages of this work. The work is partially supported by SDSI and SAIL at Stanford University.

{ \small \bibliography{ref.bib}
\bibliographystyle{iclr2020_conference}
}

\appendix
\section{Degeneracy of learning $Q$ functions from demonstrations and our solutions}\label{sec:app:degen}
Fitting $Q^{\pe}$ from only demonstration  is problematic: there exists a function $Q(s,a)$ that does not depend on $a$ at all, which can still match $Q^{\pe}$ on all possible demonstration data. Consider $Q(s,a) \triangleq Q^{\pe}(s, \pe(s))$. We can verify that for any $(s,a)$ pair in the demonstrations satisfying $a=\pe(s)$, it holds that $Q(s,a) = Q^{\pe}(s,a)$. However, $Q(s,a)$ cannot be accurate for other choices of $a$'s because by its definition, it does not use any information from the action $a$.

\subsection{Coping with the degeneracy with learned dynamical model}\label{sec:degenerancy}
Cautious reader may realize that the degeneracy problem about learning $Q$ function from demonstrations with deterministic policies may also occur with learning the model $M$. However, we will show that when the problem has the particular structure of reaching a random but given goal state $g$,  learning $Q$ still suffers the degeneracy but learning the dynamics does not.

The typical way to deal with a random goal state $g$ is to consider goal-conditioned value function $V(s,g)$, policy $\pi(s,g)$, and $Q$ function $Q(s,a,g)$.\footnote{This is equivalent to viewing the random goal $g$ as part of an extended state $\breve{s} = (s,g)$. Here the second part of the extended state is randomly chosen during sampling the initial state, but never changed by the dynamics. Thus all of our previous work does apply to this situation via this reduction.} However, the dynamical model does not have to condition on the goal. Learning $Q$ function still suffers from the degeneracy problem because $Q(s,a,g) \triangleq Q^{\pe}(s, \pe(s,g),g)$ matches $Q^{\pe}$ on the demonstrations but does not use the information from $a$ at all. However, learning $M$ does not suffer from such degeneracy because given a single state $s$ in the demonstration, there is still a variety of action $a$ that can be applied to state $s$ because there are multiple possible goals $g$. (In other words, we cannot construct a pathological $M(s,a) = M^\star(s,\pe(s))$ because the policy also takes in $g$ as an input). As a result, parameterizing $Q$ by $Q(s,a,g) = V(M(s,a), g)$ do not suffer from the degeneracy either.

\section{Missing Proofs in Section~\ref{sec:theory}}\label{sec:theory-app}
\begin{proof}[Proof of Lemma~\ref{lem:staying_close}]	Let $\bar{s} = \Pi_{\cM} s$ and $\bar{a} = \pe(\bar{s})$. Because $s$ is $\eps$-close to the set $\cM$, we have $\|s-\bar{s}\|\le\eps$.  By the $(\gamma, L_c, \eps)$-locally-correctable assumption of the dynamics, we have that there exists an action $\ac$ such that a) $\|\ac- \bar{a}\|\le L_c\eps$ and b) $\stc' \triangleq M^\star(s,\ac)$ is $\gamma\eps$-close to the set $\cM$. Next we show that $\ac$ belongs to the constraint set $\{a: \|a-\pibc(s)\|\le \zeta \}$ in equation~\eqref{eqn:policy}. Note that $\|\ac - \pibc(s)\|\le \|\ac-\bar{a}\| + \|\bar{a}-\pibc(\bar{s})\| + \| \pibc(\bar{s})- \pibc(s)\| \le L_c\eps + \delta_\pi + L_\pi\eps$  because of triangle inequality,  the closeness of $\ac$ and $\bar{a}$, the assumption that $\pibc$ has $\delta$ error in the demonstration state set $\cM$, and the Lipschitzness of $\pibc$. Since $\zeta $ is chosen to be bigger than $L_c\eps + \delta_\pi + L_\pi\eps$, we conclude that $\ac$ belongs to the constraint set of the optimization in equation~\eqref{eqn:policy}.

	This suggests that the maximum value of the optimization~\eqref{eqn:policy} is bigger than the corresponding value of $\ac$:
	\begin{align}
V(M(s,a)) \ge  V(M(s,\ac)) \label{eqn:1}
	\end{align}
	Note that $a$ belongs to the constraint set by definition and therefore $\|a-\ac\|\le 2\zeta$.  By Lipschitzness of the dynamical model,
	and the value function $V^{\pe}$, we have that $\|M^\star(s,a)-M^\star(s,\ac)\| \le L_M \|a-\ac\|\le 2L_M\zeta$.
			Let $s' = M^\star(s,a)$ and $\stc' = M^\star(s,\ac)$. We have $\|s'-\stc'\|\le 2L_M\zeta$.
	By the Lipschitz projection assumption, we have that $\|\Pi_{\cM} s'-\Pi_{\cM} \stc'\|  \le L_{\Pi} \|s'-\stc'\|\le 2L_{\Pi}L_M\zeta$, which in turns implies that $|V^{\pe}(\Pi_{\cM} s')-V^{\pe}(\Pi_{\cM} \stc')| \le  2L_VL_{\Pi}L_M\zeta$ by Lipschitzness of $V^{\pe}$. 	It follows that
	\begin{align}
	V (s') & \le 	V^{\pe} (\Pi_{\cM}s') - \lambda\|s'-\Pi_{\cM}s'\| + \delta_V \tag{by assumption~\eqref{eqn:req2}}\\
 & \le 	V^{\pe} (\Pi_{\cM}\stc') +|V^{\pe} (\Pi_{\cM}\stc') - V^{\pe} (\Pi_{\cM}s')| - \lambda\|s'-\Pi_{\cM}s'\| + \delta_V \tag{by triangle inequality}\\
  & \le 	V^{\pe} (\Pi_{\cM}\stc') +  2L_VL_{\Pi}L_M\zeta  - \lambda\|s'-\Pi_{\cM}s'\|+ \delta_V \tag{by equations in paragraph above}\\
    & \le 	V (\stc') + \lambda \|\stc' - \Pi \stc'\| + 2L_VL_{\Pi}L_M\zeta  - \lambda\|s'-\Pi_{\cM}s'\| + 2\delta_V \tag{by assumption~\eqref{eqn:requirement}}
         	\end{align}
 	Note that by the Lipschitzness of the value function and the assumption on the error of the dynamical model,
 	\begin{align}
 |V(s') - V(M(s,a))|   & =   |V(M^\star(s,a)) - V(M(s,a))|  \nonumber\\
 & \le L_v \|M^\star(s,a)- M(s,a)\|\le L_V\delta_M\label{eqn:3}
 	\end{align}
 	Simlarly
 	 	\begin{align}
 	|V(\stc') - V(M(s,\ac))|  & =   |V(M^\star(s,\ac)) - V(M(s,\ac))| \nonumber\\
 	&  \le L_v \|MT^\star(s,\ac)- M(s,\ac)\|\le L_V\delta_M\label{eqn:4}
 	\end{align}
 	Combining the three equations above, we obtain that
 	\begin{align}
 	& \lambda \|\stc' - \Pi \stc'\| + 2L_VL_{\Pi}L_M\zeta  - \lambda\|s'-\Pi_{\cM}s'\| + 2\delta_V \nonumber\\
 	& \ge V(s') - V (\stc')  \nonumber\\
 	& \ge V(M(s,a)) - V(M(s,\ac)) - 2L_V\delta_M \tag{by equation~\eqref{eqn:3} and~\eqref{eqn:4}}\\
 	& \ge -2L_V\delta_M\tag{by equation~\eqref{eqn:1}}
 	\end{align}
	Let $\kappa = 2L_VL_{\Pi}L_M\zeta + 2\delta_V + 2L_V\delta_M $ and use the assumption that $\stc'$ is $\gamma\eps$-close to the set $\cM$ (which implies that $\|\stc' - \Pi \stc'\|\le \gamma \eps$), we obtain that
	\begin{align}
	 \lambda\|s'-\Pi_{\cM}s'\|  \le  \lambda \|\stc' - \Pi \stc'\| + \kappa \le \lambda \gamma \eps + \kappa
	\end{align}
	Note that $\lambda \ge \frac{\kappa}{(1-\gamma)\eps}$, we have that $\|s'-\Pi_{\cM}s'\| \le \eps$.

\end{proof}

\begin{proof}[Proof of Theorem~\ref{thm:main}] 	To prove bullet 1, we apply Lemma~\ref{lem:staying_close} inductively for $T$ steps. To prove bullet 2, we will prove that as long as $s_i$ is $\eps$-close to $\cM$, then we can improve the value function by at least $\rho - ?$ in one step. Consider $\bar{s}_i = \Pi_{\cM}(s_i)$. We triangle inequality, we have that $\|\pi(s_i) - \pe(\bar{s}_i)\|\le \|\pi(s_i) - \pibc(s_i)\| + \|\pibc(s_i) - \pibc(\bar{s}_i)\|+ \|\pibc(\bar{s}_i) -\pe(\bar{s}_i)\|$. These three terms can be bounded respectively by $\zeta$, $L_\pi\|s_i-\bar{s}_i\|\le L_\pi\eps$, and $\delta_\pi$, using the definition of $\pi$, the Lipschitzness of $\pibc$, and the error assumption of $\pibc$ on the demonstration state set $\cM$, respectively. It follows that $\|M^\star(s_i,\pi(s_i)) - M^\star(\bar{s}_i,\pe(\bar{s}_i) )\|\le L_{M,s}\|s_i - \bar{s}_i\| + L_{M,a}\|\pi(s_i) - \pe(\bar{s}_i)\|\ \le L_{M,s}\eps + L_{M,a}(\zeta +L_\pi \eps + \delta_\pi)$. It follows by the Lipschitzness of the projection that \begin{align}  \|\bar{s}_{i+1} - \Pi_{\cM} M^\star(\bar{s}_i,\pe(\bar{s}_i) )\| & = \|\Pi_\cM M^\star(s_i,\pi(s_i)) - \Pi_{\cM} M^\star(\bar{s}_i,\pe(\bar{s}_i) )\| \\
	& = |\Pi_\cM M^\star(s_i,\pi(s_i)) - M^\star(\bar{s}_i,\pe(\bar{s}_i) )\\ &
	\le L_\Pi (L_{M,s}\eps + L_{M,a}(\zeta +L_\pi \eps + \delta_\pi))\end{align}
	This implies that
	\begin{align}  |V^{\pe}(\bar{s}_{i+1}) - V^{\pe} (\Pi_{\cM} M^\star(\bar{s}_i,\pe(\bar{s}_i) )| 	\le L_V L_\Pi (L_{M,s}\eps + L_{M,a}(\zeta +L_\pi \eps + \delta_\pi))\end{align}

		Note that we assumed that $V(M^\star(\bar{s}_i,\pe(\bar{s}_i)) \ge V(\bar{s}_i) + \rho$ or $M^\star(\bar{s}_i,\pe(\bar{s}_i))$ reaches the goal. If the former, it follows that $V^{\pe}(\bar{s}_{i+1})\ge V^{\pe}(\bar{s}_i) +\rho -  L_V L_\Pi (L_{M,s}\eps + L_{M,a}(\zeta +L_\pi \eps + \delta_\pi))\ge V^{\pe}(\bar{s}_i) +\rho/2$. Otherwise,  or $s_{i+1}$ is $\eps$-close to $\bar{s}_{i+1}$ whose value is at most $- L_V L_\Pi (L_{M,s}\eps + L_{M,a}(\zeta +L_\pi \eps + \delta_\pi)) = -O(\eps+\delta_\pi)$

	\end{proof}

\section{ Algorithms VINS+RL}\label{sec:app:vinsrl}

A pseudo-code our algorithm VINS+RL can be found in Algorithm~\ref{alg:il-rl}
\begin{algorithm}
    \caption{Value Iteration with Environment Interactions Initialized by \vins{} (\vins{}+RL)}
    \label{alg:il-rl}
    \begin{algorithmic}[1]
        \Require Initialize parameters $\phi, \theta$ from the result of \vins{} (Algorithm \ref{alg:il})
        \State  $\calR \gets$ demonstration trajectories;
        \For{stage $t = 1, \dots$}
            \State collect $n_1$ samples using the induced policy $\pi$ in Algorithm~\ref{alg:il} (with Option 2 in Line~\ref{line:option}) and add them to $\calR$
            \For{$i = 1, \dots, n_{\text{inner}}$}
                \State sample mini-batch $\calB$ of $N$ transitions $(s, a, r, s')$ from $\calR$
                                    \State update $\phi$ to minimize $\calL_{td}(\phi; \calB)$
                    \State update target value network: $\bar \phi \gets \bar \phi + \tau (\phi - \bar \phi)$
                                \State update $\theta$ to minimize loss $\calL_\text{model}(\theta; \calB)$
            \EndFor
        \EndFor
    \end{algorithmic}
\end{algorithm}

\section{Implementation Details~\label{sec:exp_app}}
\subsection{Setup~\label{sec:exp_app:setup}}
In the three environments, a 7-DoF robotics arm is manipulated for different goals.
In Reach, the task is to reach a randomly sampled target location; In Pick-And-Place, the goal is to grasp a box in a table and reach a target location, and in Push, the goal is to push a box to a target location.

The reward function is 0 if the goal is reached; otherwise -1. Intuitively, an optimal agent should complete the task in a shortest possible path. The environment will stop once the agent achieves the goal or max step number have been reached. Reaching max step will be regarded as failure.

For more details, we refer the readers to \citep{robotics}.

\subsection{hyperparameters~\label{sec:exp_app:hyper}}

\textbf{Behavioral Cloning} We use a feed-forward neural network with 3 hidden layers, each containinig 256 hidden units, and ReLU activation functions. We train the network until the test success rate plateaus.

\textbf{Nair et al.'18~\citep{nair2018overcoming}}
We use the implementation from \url{https://github.com/jangirrishabh/Overcoming-exploration-from-demos}. We don't change the default hyperparameters, except that we're using 17 CPUs.

\textbf{GAIL \citep{ho2016generative}} We use the implementation from OpenAI Baselines \citep{baselines}. We don't change the default hyperparameters.

\textbf{VINS}
\begin{itemize}

\item Architecture: We use feed-forward neural networks as function approximators for values and dynamical models. For the $V_\phi$, the network has one hidden layer which has 256 hidden units and a layer normalization layer~\citep{lei2016layer}. The dynamics model is a feed-forward neural network with two hidden layers and ReLU activation function. Each hidden layer has 500 units. The model uses the reduced states and actions to predict the next reduced states.

\item Value augmentation: We augment the dataset by a linear interpolation between two consecutive states, i.e., for a transition $(s, a, r, s')$ in the demonstration, it's augmented to $(s + \lambda (s' - s), a, \lambda r, s')$ for a random real $\lambda \sim \text{Uniform}[0, 1]$.
To minimize the losses, we use the Adam optimizer \cite{adam} with learning rate $3 \times 10^{-4}$. We remove  some less relevant coordinates from the states space to make the dimension smaller. (But we maintain the full state space for BC. BC will perform worse with reduce state space.) Specifically, the states of our algorithm for different environment are a) Reach: the position of the arm, b) Pick: the position of the block, the gripper, and the arm,
and c) Push: the position of the block and the arm.

\item States representation and perturbation: Both our model $M$ and value function $V$ are trained on the space of reduced states. The $\perturb$ function is also designed separately for each task. For Reach and Push, we perturb the arm only; For Pick-And-Place, we perturb the arm or the gripper. In the implementation, we perturb the state $s$ by adding Gaussian noise from a distribution $\calN(0, \rho \Sigma)$, where $\Sigma$ is a diagonal matrix that contains the variances of each coordinate of the states from the demonstration trajectories. Here $\rho > 0$ is a hyper-parameter to tune.
\end{itemize}

\end{document}